\newtheorem{thmx}{Theorem}
\renewenvironment{proof}{{\bf Proof:}}{\qed}
\title{Towards a Spectrum of Graph Convolutional Networks}
\name{Mathias Niepert and Alberto Garc{\'{\i}}a{-}Dur{\'{a}}n}
\address{NEC Labs Europe}
\begin{document}
%
\maketitle
\begin{abstract}
We present our ongoing work on understanding the limitations of graph convolutional networks (GCNs) as well as our work on generalizations of graph convolutions for representing more complex node attribute dependencies. Based on an analysis of GCNs with the help of the corresponding computation graphs, we propose a generalization of existing GCNs where the aggregation operations are (a) determined by structural properties of the local neighborhood graphs and (b) not restricted to weighted averages. We show that the proposed approach is strictly more expressive while requiring only a modest increase in the number of parameters and computations. We also show that the proposed generalization is identical to standard convolutional layers when applied to regular grid graphs. 
\end{abstract}
\begin{keywords}
graph convolutional networks, graph partition, convolutions
\end{keywords}
\section{Introduction}
\label{sec:intro}

With this paper we present our ongoing work on exploring a spectrum of graph convolutional networks that addresses some of the shortcomings of GCNs as introduced in the context of graph convolutional learning~\cite{kipf2016semi,monti2016geometric}. We provide some background on graph convolutional networks, and explore GCNs with the help of computation graphs. We then introduce a novel way to generalize GCNs. This is done by clustering the neighborhood of each node into exactly $c$ components. Each of these components determines the nodes whose attribute values are aggregated into a $c$-dimensional vector. This introduces more complex feature maps and we can show that for regular graphs the method is as expressive as standard 2D convolutional networks. At the same time, we can show that the GCN as introduced by Kipf \& Welling and the MoNet framework are special instances of the proposed approach where the attribute values of all nodes in the 1-hop neighborhood are average pooled based on global node degree information. 
The limitations of GCNs as introduced by Kipf \& Welling on regular graphs, that is, graphs with a regular neighborhood connectivity such as grid graphs, have been explored before\footnote{http://www.inference.vc/how-powerful-are-graph-convolutions-review-of-kipf-welling-2016-2/}. Our motivation is an exploration of generalizations of GCNs. 

Since this is a paper describing ongoing work, we do not provide experimental results. We state, however, some results on the expressivity of the method and its relationship to existing approaches. We also believe that the discussion of GCNs in terms of computation graphs is helpful for understanding GCNs and their relation to other graph-based convolutional network approaches which perform an ordering operation in node neighborhoods~\cite{niepert2016learning}. 

\section{Related Work}
\label{related_work}

There has been a large number of methods for learning graph representations. These include not only approaches to the node classification problem~\cite{perozzi2014deepwalk,tang2015line,yang2016revisiting,kipf2016semi,ep2017} but also approaches for the problem of classifying entire graphs~\cite{yanardag2015deep,duvenaud2015convolutional,li2015gated,niepert2016learning,defferrard2016convolutional,bronstein2017geometric}. With our work we focus on the problem of node representation learning and, specifically, graph convolutional networks applied to this problem. 

There is a large number of graph-based learning frameworks such as graph neural networks~(\textsc{GNN}) \cite{scarselli2009graph}, gated graph sequence neural networks (\textsc{GG-SNN})~\cite{li2015gated}, diffusion-convolutional neural networks  (\textsc{DCNN})~\cite{atwood2016diffusion}, and graph convolutional networks (\textsc{GCN}) \cite{kipf2016semi}, to name but a few. Most of these methods can be described as instances of the Message Passing Neural Network (\textsc{Mpnn}) framework~\cite{gilmer2017neural} where messages are passed between nodes to update their representations. Moreover, several more recent approaches such as GCNs~\cite{kipf2016semi}, DCNNs~\cite{atwood2016diffusion}, and graph attention networks~\cite{velivckovic2017graph} are special cases of MoNet~\cite{monti2016geometric}.
The training of these instances is guided by a supervised loss and the methods were shown to have state-of-the-art performance on a diverse set of learning problems.

There are several approaches defining convolutions by applying operations on groups
of neighbors. One of the challenges all of these methods have in common is to apply convolutions to neighborhoods of varying sizes so as to preserve the weight sharing property of standard CNNs for regular graphs. A seminal strategy involved the learning of a weight matrix for each possible node degree~\cite{duvenaud2015convolutional}. 
An additional recent idea applies self-attention to the graph learning domain~\cite{velivckovic2017graph}.

There are also several novel unsupervised learning approaches suitable to address node classification problems. \textsc{DeepWalk} \cite{perozzi2014deepwalk} collects random walks on the graphs and applies a \textsc{Skipgram} \cite{mikolov2013distributed} model to learn node representations. Despite its conceptual simplicity, it was shown to outperform spectral clustering approaches with the  advantage of scaling to large graphs.  
More recent work \cite{hamilton2017inductive,ep2017} introduce unsupervised learning frameworks that can incorporate node attributes.

\section{Background}

A predecessor to CNNs was the Neocognitron~\cite{Kunihiko:1980}. 
A typical (deep) CNN is composed of convolutional and dense layers. The purpose of the convolutional layers is the extraction of common patterns found within local regions of the input images. CNNs convolve learned filters over the input image, computing the inner product at every location in the image and outputting the result as tensors. Figure~\ref{fig-conv} illustrates a  convolutional layer. 

A graph $G$ is a pair $(V, E)$ with $V = \{v_1, ..., v_n\}$ the set of vertices and $E \subseteq V \times V$ the set of edges. Let $n$ be the number of vertices and $m$ the number of edges. 
 Each graph can be represented by an adjacency matrix $\mathbf{A}$ of size $n \times n$, where $\mathbf{A}_{i,j} = 1$ if there is an edge from vertex $v_i$ to vertex 
$v_j$, and $\mathbf{A}_{i,j} = 0$ otherwise. Node and edge attributes are features that attain one value for each node and edge of a graph. We use the term attribute value instead of label to avoid confusion with the graph-theoretical concept of a labeling. 

To compute partitions of node neighborhoods, we utilizes graph labeling methods to impose an order on nodes. 
A graph labeling $\ell$ is a function $\ell: V \rightarrow S$  from the set of vertices $V$ to an ordered set $S$ such as the real numbers and integers. A graph labeling procedure computes a graph labeling for an input graph. A ranking (or coloring) is a function $\mathbf{r}:V \rightarrow \{1, ..., |V|\}$. Every labeling induces a ranking $\mathbf{r}$ with $\mathbf{r}(u) < \mathbf{r}(v)$ if and only if $\ell(u)>\ell(v)$.  Every graph labeling induces a partition $\{V_1, ..., V_n\}$ on $V$  with $u, v \in V_i$ if and only if $\ell(u)=\ell(v)$. 
Examples of graph labeling procedures are node degree and other measures of centrality commonly used in the analysis of networks. 
A canonicalization of a graph $G$ is a graph $G'$ with a fixed vertex order which is isomorphic to $G$ and which represents its entire isomorphism class. In practice, the graph canonicalization tool \textsc{Nauty} has shown remarkable performance~\cite{McKay:2014}.

\begin{figure}[t]
\centering
\includegraphics[width=0.9\columnwidth]{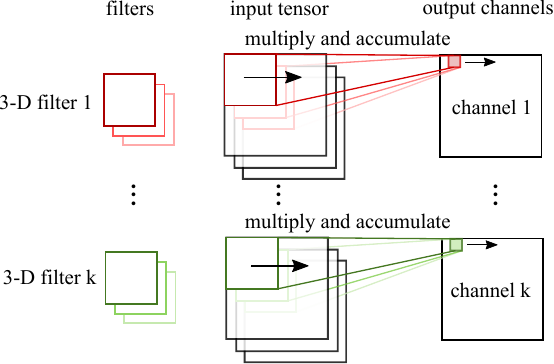}
\caption{\label{fig-conv} Each convolutional layer moves a set
          of 2-D filters (each being part of one 3-D filter) over the
          input channels from left to right and top to bottom and
          multiplies and accumulates the corresponding values. }
\end{figure}

\section{Graph Convolutional Networks}

Convolutional neural networks have been successfully applied to a large number of problems in areas such as computer vision and natural language processing. Figure~\ref{fig-conv} depicts the working of a typical convolutional network. Since a straight-forward application of convolutions is only possible in domains where the data can be described with a regular graph such as a grid graph modeling the pixels of an image, there has been recent work on extending ideas from convolutional network to irregular domains. 

In that line of work, graph convolutional networks (GCNs) as proposed by Kipf and Welling are a class of graph-based machine learning models for semi-supervised learning. GCNs are highly competitive and were shown to outperform Chebyshev~\cite{defferrard2016convolutional}, Planetoid~\cite{yang2016revisiting}, and EmbedNN~\cite{weston2012deep} and to be competitive with MoNet~\cite{monti2016geometric} on the commonly used benchmark data sets Cora, Citeseer, and Pubmed.

The GCNs introduced in previous work~\cite{kipf2016semi} can be characterized by the expression 
$$\mathbf{H}^{(l+1)} = \sigma( \hat{\mathbf{A}} \mathbf{H}^{(l)}\mathbf{W}^{(l)}),$$
where $\hat{\mathbf{A}}$ is some normalized variant of the adjacency matrix, $\mathbf{H}^{(l)}$ contains the vector representation (the feature map) of the graph vertices in the $l$-th layer, $\mathbf{W}^{(l)}$ is the weight matrix of the $l$th layer, and $\sigma$ a non-linearity.

While GCNs are highly useful for the type of benchmark data sets commonly used in the literature (e.g. the citation networks with text data Cora and Pubmed), they have some limitations. To better understand these limitations it helps to explore GCNs with the corresponding computation graph. It is straight-forward to show that the above formulation of GCNs is equivalent (up to a slightly different pooling operation) to the one depicted in Figure~\ref{fig-pool-gcn}. A pooling operation is an operation that takes a set of $n$-dimensional vectors and outputs an $n$-dimensional vector and can be more complex than average or max pooling. Each of the layers performs, for each node $v$ in the graph, a pooling operation that includes all 1-hop neighbors of $v$ as well as $v$ itself. Hence, there is one aggregate value computed for each input feature of that layer. The various 2D filters of the conv layers, therefore, have always dimension $a \times 1$ where $a$ is the number of input features (attributes). This makes the modeling of more complex dependencies between the features of nodes in the 1-hop neighborhood challenging if not impossible. Again, we believe that Figure~\ref{fig-pool-gcn} is helpful when trying to understand the limitations of Kipf \& Wellings GCNs. MoNet~\cite{monti2016geometric} generalizes this pooling operation by performing several weighted averages where the weights are computed from Gaussian distributions. Similar to GCNs, however, the weights are determined only by the global node degree information from neighboring nodes. 

\begin{figure}[t!]
	\centering
	\includegraphics[width=0.82\columnwidth]{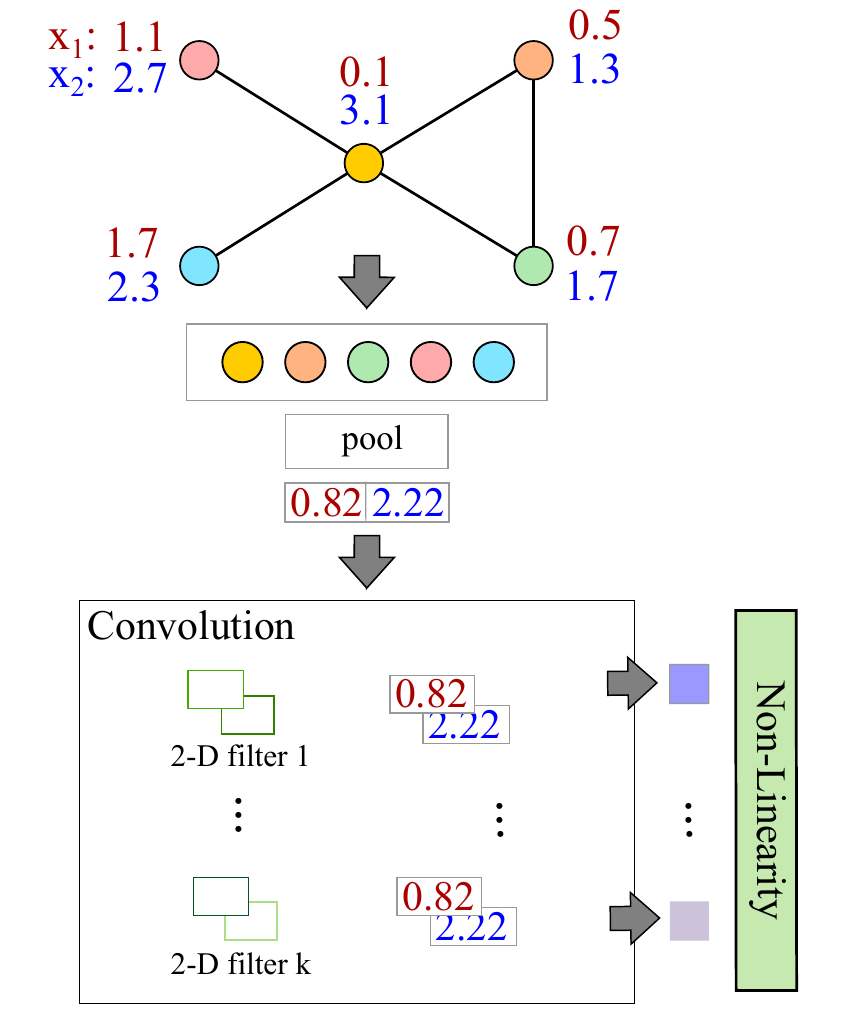}
	\caption{\label{fig-pool-gcn} Illustration of the GCN pooling and convolution operations in previous work~\cite{kipf2016semi}. For each node in the graph, the values of each attribute and all nodes in the 1-hop neighborhood are pooled. For the sake of simplicity, we use average pooling in the illustration. The pooling operation in Kipf \& Welling is fixed for each node and computed from the global node degrees. }
\end{figure}

\section{Generalizing GCNs}

We explore generalizations of the GCN of Kipf \& Welling and the generic MoNet framework. One such generalization involves the computation of a partition (or clustering) of $\mathbf{N}(v)$, the neighborhood of $v$, and using this partition to guide the aggregation operations. Instead of aggregating the feature values of all nodes before applying impoverished convolutions, we only aggregate the feature values of the nodes in each of the components of the partition. Hence, instead of only one aggregated value per feature, the input to the convolutions is now a vector of values on which filters can operate. In the following we describe the different changes and components of the proposed generalization. Figure~\ref{fig-pool-partition} illustrates the core ideas again using the corresponding computation graph.

\subsection{Neighborhood Clustering}

To apply more complex convolutional filters to the neighborhood of a given vertex $v$, we compute a partition or (soft) clustering of the \emph{local} neighborhood and obtain a more fine-grained set of aggregation operations. This shifts the problem to finding efficient and reasonable methods for generating such clusterings. Fortunately, there are efficient graph labeling algorithms that compute scores for vertices in a graph based on their structural roles. For instance, centrality measures score nodes according to their roles in the graph. The approach we propose here is motivated by the objective to compute clusterings such that the nodes in one cluster fulfill similar structural roles in the graph induced by $\mathbf{N}(v)$. 
\begin{algorithm}[t!]
   \small
   \caption{\label{alg:partition}Structural Partitioning}
\begin{algorithmic}[1]
   \STATE {\bfseries input:} vertex $v$, graph labeling method $\ell$, partition size $c$
   \STATE {\bfseries output:} $c$-partition of v's 1-hop neighbors $\mathbf{N}(v)$ 
   \STATE create subgraph $G' = (E', V')$ induced by $v$ and its 1-hop neighborhood
   \STATE compute values $\mathbf{r}$ of $V'$ by applying $\ell$ to $G'$
   \STATE set $C_1 := \{v\}$
   \STATE apply clustering approach (k-means, ...) to $\mathbf{r}$ with $c-1$ centers
   \STATE assign vertices to the clusters $\{C_2, ..., C_c\}$
   \STATE {\bfseries return}  $\{C_1, ..., C_c\}$
\end{algorithmic}
\end{algorithm}
Hence, for each node $v$ in the graph and as a preprocessing step, we apply a graph labeling method to the graph \emph{induced by the neighborhood} of $v$. The resulting labeling (ranking) is used to construct the partition of the neighbors. Note that node labeling approaches such as Weisfeiler-Lehman~\cite{weisfeiler:1968} and centrality measures provide a strictly more fine-grained partition/clustering of the nodes compared to the node degree used in GCNs and MoNet. Moreover, the weighted averaging performed by GCNs and MoNet  is based on the global node degree and not structural measures applied to the local neighborhood graphs. 
Algorithm~\ref{alg:partition} lists the procedure for computing a partition for each of the nodes in the graph. Again, the idea is that the partition lumps together nodes with similar structural roles. The aggregation operation is then only performed on sets of these more similar nodes. 
As alternatives to labeling methods such as centrality measures, one could also use the attribute values or some existing domain knowledge to compute the partitions. 

The partitioning can be integrated into an end-to-end differentiable model either by using the EM algorithm or by using differentiable components such as RBF functions or existing attention mechanisms.

\begin{figure}[t!]
	\centering
	\includegraphics[width=0.79\columnwidth]{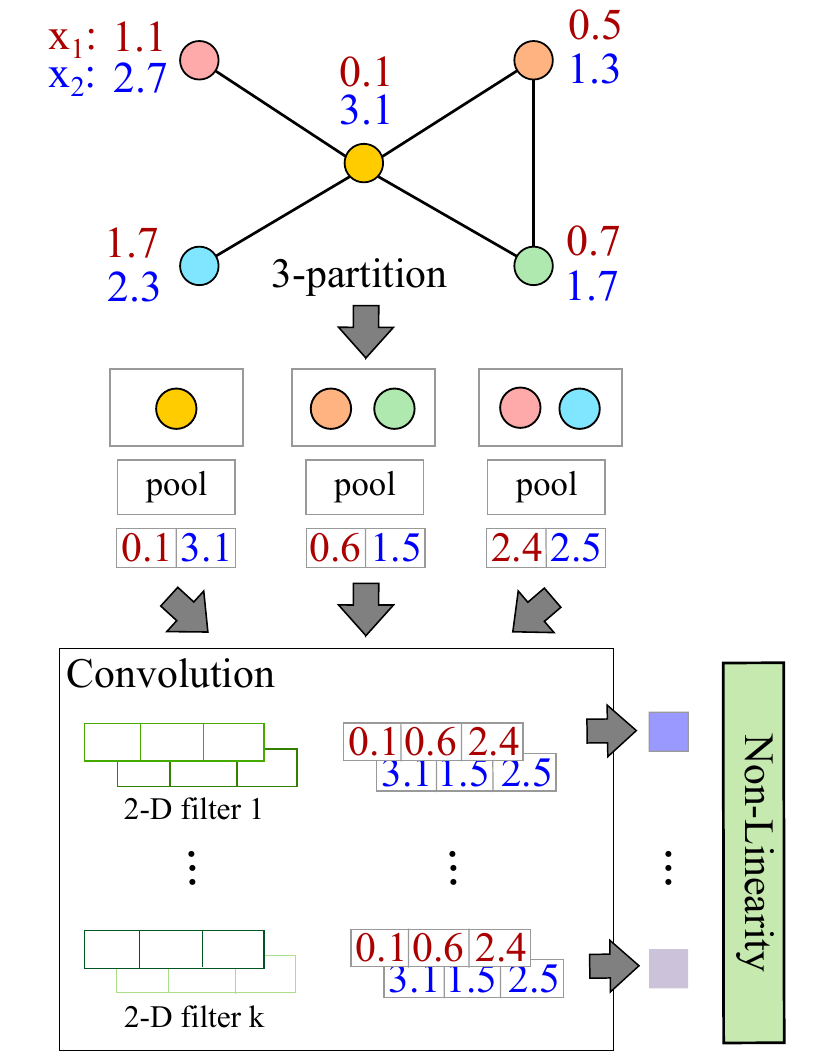}
	\caption{\label{fig-pool-partition} An illustration of a $3$-GCN where every 1-hop neighborhood is partitioned into exactly $3$ components. The aggregation operation is then performed  per component and attribute. The feature map of the following convolution can now operate on more than just one value per channel. }
\end{figure}

\subsection{Statistical Aggregation}

The aggregation component takes, for each node $v$, the partition $\{C_1, ..., C_c\}$ of $\{v\} \cup \mathbf{N}(v)$ computed in the previous step as input. It performs an aggregation operation for each of the node attributes and each component $C_i$, providing a statistical summary of the node attribute values. For instance, if there are two node attributes $x_1 \in \mathbb{R}$ and $x_2 \in \mathbb{R}$ and with average pooling, we compute, for each $j \in \{1, ..., c\}$,  $$x_1^{(j)} = \frac{1}{|C_j|}\sum_{v \in C_j} x_1(v) \mbox {\ \ and\  \ } x_2^{(j)} = \frac{1}{|C_j|}\sum_{v \in C_j} x_2(v),$$
where $x_i^{(j)}$ is the result of the pooling operation for partition component $j$ and attribute $i$, and $x_i(v)$ is the value of attribute $i$ at node $v$. One might want to chose a different aggregation operation for different attributes or use several aggregation functions and concatenate the results. The output of one aggregation operation is a tensor $\mathbf{B} \in \mathbb{R}^{|V| \times a \times c}$, where $a$ is the number of attributes and $c$ the number of components in the partitions.  

GCNs of Kipf \& Welling and MoNets always perform a (weighted) average aggregation which is determined by the global node degree. We believe that a combination of arbitrary pooling operations is useful in designing novel GCN variants. 

\subsection{Convolutions}

The input to the convolution is a tensor $\mathbf{B} \in \mathbb{R}^{|V| \times a \times c}$, where $a$ is the number of attributes. Hence, for every node in the graph, there is a matrix $\mathbf{B}(v) \in \mathbb{R}^{a \times c}$. The convolutional module maintains a set of $k$  2D filters of dimension $a \times c$ whose weights are learned during training. Each of these $k$ 2D  filters is applied to the matrix $\mathbf{B}(v)$ and results in a vector of dimension $k$. Hence, the output of the concolution is a feature map of dimension $|V| \times k$.
To this feature map one can apply a non-linear function element-wise such as the ReLU activation function as was done in the original GCN variant. 

As in previous work on GCNs, the number of layers, non-linearities, etc. can be chosen freely. In practice, a small number of layers has been shown to perform best.

\subsection{Formal Analysis}

As discussed before, we are motivated specifically by the lack of expressivity of the original GCN for semi-supervised learning on regular graphs. We can now show that the generalization is as expressive as a 2D convolutional net on image data \emph{without} an explicit encoding of spatial information. 
We can relate the $k$-GCN to CNNs for images in the following way. 
\begin{thmx}
Given an image in form of a grid graph.  A standard convolutional layer with receptive field size $(2m-1) \times (2m-1)$, no zero padding, and $k$ filters is identical (up to a fixed permutation) to the application of a $(2m-1)^2$-GCN with $k$ filters where the $(2m-1)^2$-partition is computed with a graph canonicalization algorithm.  
\end{thmx}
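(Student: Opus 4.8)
The plan is to reduce both objects to the same explicit linear map followed by the same pointwise non-linearity, and to show that the only remaining discrepancy is a single, position-independent permutation of the filter weights. First I would fix the neighborhood used by the GCN to be exactly the receptive field: for a pixel $v$ I take $\{v\}\cup\mathbf{N}(v)$ to be the $(2m-1)^2$ pixels lying within Chebyshev distance $m-1$ of $v$. The hypothesis of no zero padding is what makes this clean, and I would use it twice: a standard layer produces an output only at those $v$ whose full window lies inside the image, and for every such interior $v$ the induced subgraph $G'_v$ is the \emph{same} abstract $(2m-1)\times(2m-1)$ grid with $v$ marked as center. I would record the relevant isomorphism explicitly as the translation that identifies $G'_v$ with $G'_w$ by matching equal offsets from the respective centers.

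Next I would analyze what the canonicalization-based partition produces. Since $G'_v$ has exactly $(2m-1)^2$ vertices and we request a $(2m-1)^2$-partition, running the canonicalization algorithm and assigning each vertex to its own block according to its canonical rank yields $(2m-1)^2$ singleton components. Aggregation over a singleton is the identity regardless of the pooling function, so the tensor slice $\mathbf{B}(v)\in\mathbb{R}^{a\times c}$ simply re-lists, in canonical order, the attribute vectors of the $(2m-1)^2$ pixels of the window. Concretely, if $\sigma$ denotes the permutation sending a pixel's offset from the center to its canonical rank, then $\mathbf{B}(v)_{i,j}=x_i\big(v+\sigma^{-1}(j)\big)$. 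A standard filter of spatial size $(2m-1)\times(2m-1)$ over $a$ channels is exactly a weight array in $\mathbb{R}^{a\times(2m-1)^2}$, i.e.\ the same shape as a $c$-GCN filter, so I would then match the $k$ filters by permuting their columns through $\sigma$: setting $F^t_{i,j}=W^t_{i,\sigma^{-1}(j)}$ makes the inner product $\sum_{i,j}F^t_{i,j}\mathbf{B}(v)_{i,j}$ coincide with the convolution $\sum_{i,p}W^t_{i,p}\,x_i(v+p)$ at every $v$, and the shared element-wise non-linearity then gives identical feature maps.

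The crux of the argument, and the step I expect to be the main obstacle, is showing that $\sigma$ is a \emph{single fixed} permutation shared by all interior pixels rather than a per-node object. This is not automatic, because the square grid carries a nontrivial dihedral automorphism group of order $8$ (unaffected by marking the center, which every symmetry fixes), so a canonical labeling is determined only up to these symmetries; a naive per-window canonicalization could in principle match the windows of two neighboring pixels through a rotation or reflection instead of the geometric translation, destroying weight sharing. I would resolve this by combining the defining property of canonicalization with consistent input presentation: since all $G'_v$ are identical as center-marked grids once their vertices are presented in a common offset order, the deterministic tie-breaking of the canonicalization returns the same labeling on each, so the induced $\sigma$ depends only on the offset and not on $v$. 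I would therefore argue that the translation identifying any two windows commutes with the canonical labeling, which pins down $\sigma$ globally and yields the claimed equality ``up to a fixed permutation.'' Finally I would note that the boundary bookkeeping is already handled by the no-padding hypothesis and that the correspondence is exact filter-for-filter, establishing the stated identity.
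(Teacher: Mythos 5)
Your proposal is correct, and it is worth saying that it contains substantially more than the paper's own proof, which consists of a single sentence asserting that "the partition of the neighbors resulting from a canonicalization algorithm such as \textsc{Nauty} is unique and identical for each of the vertices." Your argument and the paper's rest on the same pivot --- reduce the theorem to the claim that the canonicalization-induced partition is vertex-independent on interior windows --- but you actually discharge that claim rather than assert it. In particular, you correctly isolate the one genuinely nontrivial obstruction, namely that the center-marked $(2m-1)\times(2m-1)$ window has a dihedral automorphism group of order $8$ fixing the center, so a canonical labeling is a priori determined only up to these symmetries and could, for two different windows, be realized through a reflection rather than the translation, which would break weight sharing; your resolution via a translation-covariant (offset-based) input presentation plus determinism of the algorithm is the right fix, and it is an honest hypothesis about the pipeline that the paper leaves implicit. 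You also supply the supporting observations the paper omits entirely: that $(2m-1)^2$ blocks on a $(2m-1)^2$-vertex window forces singleton components (so the choice of pooling function is immaterial), that a $c$-GCN filter and a spatial filter have the same shape $a\times(2m-1)^2$ so the correspondence is an explicit column permutation $F^t_{i,j}=W^t_{i,\sigma^{-1}(j)}$ of the weights, and that the no-padding hypothesis is exactly what restricts attention to interior vertices whose windows are mutually isomorphic. One small point of care you handle implicitly but should state: the theorem requires the GCN "neighborhood" to be the Chebyshev ball of radius $m-1$ (i.e., the receptive field), not the $4$-neighbor $1$-hop neighborhood of the standard grid graph; fixing this convention, as you do, is necessary for the statement to be true as written.
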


\begin{proof}
It is possible to show that if an input graph is a grid, then the partition of the neighbors resulting from a canonicalization algorithm such as \textsc{Nauty} is unique and identical for each of the vertices in the graph. 
\end{proof}

The computational overhead of $k$-GCNs is in the computation of the neighborhood clustering for each node. Depending on the labeling/canonicalization algorithm used, this can be more or less expensive. However, as has been shown before~\cite{niepert2016learning}, computing a canonicalization on smaller neighborhood graphs is highly efficient. Moreover, the partitions have to be computed only once as a preprocessing step. Once the partitions are computed, the computational complexity of a $k$-GCN is identical (up to the constant $k$) to that of a GCN.

With the ideas in this paper it is possible to define a spectrum of GCNs. On the one end, there is Kipf \& Welling GCNs where feature values of all 1-hop neighbors are average-pooled prior to the application of the convolutional filters. On one other end is the model where we chose $k$ to be the maximum node degree and we assign one node to each component of the clustering. Note that the proposed approach also generalizes recent methods for normalizing node neighborhoods~\cite{niepert2016learning}. We believe that while the best choice of clusters depends on the data, in numerous cases a more fine-grained partition can be beneficial. This is what we explore with ongoing experiments. 

\bibliographystyle{IEEEbib}
\bibliography{pgcn}

\begin{thebibliography}{10}

\bibitem{kipf2016semi}
Thomas~N Kipf and Max Welling,
\newblock ``Semi-supervised classification with graph convolutional networks,''
\newblock in {\em Proceedings of ICLR}, 2017.

\bibitem{monti2016geometric}
Federico Monti, Davide Boscaini, Jonathan Masci, Emanuele Rodola, Jan Svoboda,
  and Michael~M Bronstein,
\newblock ``Geometric deep learning on graphs and manifolds using mixture model
  cnns,''
\newblock in {\em Proceeedings of CVPR}, 2017.

\bibitem{niepert2016learning}
Mathias Niepert, Mohamed Ahmed, and Konstantin Kutzkov,
\newblock ``Learning convolutional neural networks for graphs,''
\newblock in {\em International Conference on Machine Learning}, 2016, pp.
  2014--2023.

\bibitem{perozzi2014deepwalk}
Bryan Perozzi, Rami Al-Rfou, and Steven Skiena,
\newblock ``Deepwalk: Online learning of social representations,''
\newblock in {\em Proceedings of KDD}, 2014, pp. 701--710.

\bibitem{tang2015line}
Jian Tang, Meng Qu, Mingzhe Wang, Ming Zhang, Jun Yan, and Qiaozhu Mei,
\newblock ``Line: Large-scale information network embedding,''
\newblock in {\em Proceedings of the 24th International Conference on World
  Wide Web}. International World Wide Web Conferences Steering Committee, 2015,
  pp. 1067--1077.

\bibitem{yang2016revisiting}
Zhilin Yang, William~W Cohen, and Ruslan Salakhutdinov,
\newblock ``Revisiting semi-supervised learning with graph embeddings,''
\newblock in {\em Proceedings of the International Conference on Machine
  Learning (ICML)}, 2016.

\bibitem{ep2017}
Alberto Garcia-Duran and Mathias Niepert,
\newblock ``Learning graph representations with embedding propagation,''
\newblock in {\em Advances in neural information processing systems}, 2017.

\bibitem{yanardag2015deep}
Pinar Yanardag and SVN Vishwanathan,
\newblock ``Deep graph kernels,''
\newblock in {\em Proceedings of KDD}, 2015, pp. 1365--1374.

\bibitem{duvenaud2015convolutional}
David~K Duvenaud, Dougal Maclaurin, Jorge Iparraguirre, Rafael Bombarell,
  Timothy Hirzel, Al{\'a}n Aspuru-Guzik, and Ryan~P Adams,
\newblock ``Convolutional networks on graphs for learning molecular
  fingerprints,''
\newblock in {\em Advances in neural information processing systems}, 2015, pp.
  2224--2232.

\bibitem{li2015gated}
Yujia Li, Daniel Tarlow, Marc Brockschmidt, and Richard Zemel,
\newblock ``Gated graph sequence neural networks,''
\newblock in {\em Proceedings of the International Conference on Learning
  Representations (ICLR)}, 2016.

\bibitem{defferrard2016convolutional}
Micha{\"e}l Defferrard, Xavier Bresson, and Pierre Vandergheynst,
\newblock ``Convolutional neural networks on graphs with fast localized
  spectral filtering,''
\newblock in {\em Advances in Neural Information Processing Systems}, 2016, pp.
  3844--3852.

\bibitem{bronstein2017geometric}
Michael~M Bronstein, Joan Bruna, Yann LeCun, Arthur Szlam, and Pierre
  Vandergheynst,
\newblock ``Geometric deep learning: going beyond euclidean data,''
\newblock {\em IEEE Signal Processing Magazine}, vol. 34, no. 4, pp. 18--42,
  2017.

\bibitem{scarselli2009graph}
Franco Scarselli, Marco Gori, Ah~Chung Tsoi, Markus Hagenbuchner, and Gabriele
  Monfardini,
\newblock ``The graph neural network model,''
\newblock {\em IEEE Transactions on Neural Networks}, vol. 20, no. 1, pp.
  61--80, 2009.

\bibitem{atwood2016diffusion}
James Atwood and Don Towsley,
\newblock ``Diffusion-convolutional neural networks,''
\newblock in {\em Advances in Neural Information Processing Systems}, 2016, pp.
  1993--2001.

\bibitem{gilmer2017neural}
Justin Gilmer, Samuel~S Schoenholz, Patrick~F Riley, Oriol Vinyals, and
  George~E Dahl,
\newblock ``Neural message passing for quantum chemistry,''
\newblock in {\em Proceedings of ICML}, 2017.

\bibitem{velivckovic2017graph}
Petar Veli{\v{c}}kovi{\'c}, Guillem Cucurull, Arantxa Casanova, Adriana Romero,
  Pietro Li{\`o}, and Yoshua Bengio,
\newblock ``Graph attention networks,''
\newblock {\em Proceedings of ICLR}, 2018.

\bibitem{mikolov2013distributed}
Tomas Mikolov, Ilya Sutskever, Kai Chen, Greg~S Corrado, and Jeff Dean,
\newblock ``Distributed representations of words and phrases and their
  compositionality,''
\newblock in {\em Advances in neural information processing systems}, 2013, pp.
  3111--3119.

\bibitem{hamilton2017inductive}
Will Hamilton, Zhitao Ying, and Jure Leskovec,
\newblock ``Inductive representation learning on large graphs,''
\newblock in {\em Advances in Neural Information Processing Systems}, 2017, pp.
  1025--1035.

\bibitem{Kunihiko:1980}
Kunihiko Fukushima,
\newblock ``Neocognitron: A self-organizing neural network model for a
  mechanism of pattern recognition unaffected by shift in position,''
\newblock {\em Biological Cybernetics}, vol. 36, no. 4, pp. 193--202, 1980.

\bibitem{McKay:2014}
Brendan~D. McKay and Adolfo Piperno,
\newblock ``Practical graph isomorphism, \{II\},''
\newblock {\em Journal of Symbolic Computation}, vol. 60, no. 0, pp. 94 -- 112,
  2014.

\bibitem{weston2012deep}
Jason Weston, Fr{\'e}d{\'e}ric Ratle, Hossein Mobahi, and Ronan Collobert,
\newblock ``Deep learning via semi-supervised embedding,''
\newblock in {\em Neural Networks: Tricks of the Trade}, pp. 639--655.
  Springer, 2012.

\bibitem{weisfeiler:1968}
Boris Weisfeiler and AA~Lehman,
\newblock ``A reduction of a graph to a canonical form and an algebra arising
  during this reduction,''
\newblock {\em Nauchno-Technicheskaya Informatsia}, vol. 2, no. 9, pp. 12--16,
  1968.

\end{thebibliography}

\end{document}